\documentclass{article}

\usepackage[preprint]{caisc_2026}

\usepackage[utf8]{inputenc} 
\usepackage[T1]{fontenc}    
\usepackage{hyperref}       
\usepackage{url}            
\usepackage{booktabs}       
\usepackage{amsfonts}       
\usepackage{nicefrac}       
\usepackage{microtype}      
\usepackage{xcolor}         
\usepackage{amsmath}
\usepackage{amssymb}
\usepackage{graphicx}
\usepackage{amsthm}
\newtheorem{proposition}{Proposition}
\title{Emergent Ordinal Geometry in Transformers Trained on Local Comparisons}

\author{%
  Nishit Singh\thanks{\href{https://singhnishit.github.io/singhnishit/}{Personal webpage}} \\
  Birla Institute of Technology and Science, Pilani\\
  India\\
  \texttt{f20221317@pilani.bits-pilani.ac.in} \\
}

\begin{document}

\maketitle

\begin{abstract}
  Transitive inference is the challenge of inferring that A < C from knowing only adjacent relations (A < B, B < C). It is solved by humans and animals not through logical chaining but via an analogue mental number line, whose signature is the symbolic distance effect: distant comparisons are easier than nearby ones. We ask whether Transformers acquire the same primitive, training small models exclusively on adjacent comparisons from a hidden total order and evaluating generalization to unseen distant pairs. We find that out-of-distribution generalization emerges alongside a striking geometric reorganization: entity embeddings collapse onto a one-dimensional manifold whose principal axis recovers the hidden rank order with near-perfect fidelity, and this structure is sensitive to optimization in ways that produce grokking-like transient dynamics. Critically, even when accuracy is at ceiling, decision confidence and geometric separation both scale monotonically with rank distance, directly mirroring the symbolic distance effect observed across decades of behavioural experiments on humans, primates, and rodents. We further show the same rank-aligned geometry in a pretrained large language model, where it tracks the topology of each ordinal relation: linear for sizes and digits, cyclic for months. These results ground a 50-year-old behavioural regularity in the geometry of learned representations, offering a mechanistic account of transitive inference that bridges cognitive science and modern neural networks.
\end{abstract}

\section{Introduction}

Knowing that Alice is taller than Bob and Bob taller than Carol, one infers that Alice is taller than Carol without direct comparison. This capacity, \textit{transitive inference} (TI), is phylogenetically widespread and documented in children (\cite{Bryant1971-tp}), primates (\cite{Mcgonigle1977-qz}), rodents (\cite{Davis1992-ui}), and birds (\cite{Steirn1995}) and, crucially, is not solved by formal logic. Behavioral work instead supports an analog mental number line: items are arranged along a continuous internal ordering from which comparisons are read off spatially (\cite{RILEY19761}). The signature of this account is the symbolic distance effect: distant comparisons are faster and more accurate than adjacent ones (\cite{Bryant1971-tp}), the opposite of what step-by-step chaining predicts, and exactly what one expects if order is encoded as position on a line.

Transformers are under no explicit pressure to represent order this way, yet abstract relational structure is known to emerge in their representations as a byproduct of training (\cite{minegishi2026emergentanalogicalreasoningtransformers, mikolov-etal-2013-linguistic}), sometimes abruptly and late, in grokking-like transitions (\cite{power}). We ask a question at the intersection of cognitive science and interpretability: when a Transformer learns to order items from only local comparisons, does it discover the same primitive, \textit{a mental number line}, and exhibit the same behavioral signatures as biological cognition?

Mirroring the original \cite{Bryant1971-tp} paradigm, in which subjects learn adjacent comparisons and are tested on inferred distant pairs, we train small Transformers only on adjacent relations from a hidden total order and evaluate on held-out distant pairs. We find that out-of-distribution generalization coincides with a geometric reorganization: entity embeddings collapse onto a one-dimensional manifold whose principal axis recovers the hidden rank order. The resemblance is behavioral as well as structural, even at accuracy ceiling, decision confidence and geometric separation scale with rank distance, a computational analogue of the symbolic distance effect. Our contributions are:
\textbf{(i)} a minimum-norm account predicting the geometry, including end-anchoring;
\textbf{(ii)}  A synthetic task, modeled on the classic TI paradigm, isolating ordinal generalization from local comparisons. \textbf{(iii)} Evidence that Transformers spontaneously form a 1D ordinal manifold and reproduce the symbolic distance effect. \textbf{(iv)} evidence that the same geometry appears in a pretrained 1.5B LLM, tracking the topology of each relation.

\section{Background}
\paragraph{Transitive inference and the distance effect:}
Since \cite{Bryant1971-tp}, TI has been studied as ordinal rather than logical reasoning; the symbolic distance and end-anchor effects are its canonical behavioral signatures across species (\cite{Steirn1995}).

\paragraph{Emergent structure in Transformers:}
Generalizing solutions can appear abruptly after memorization, a phenomenon termed grokking (\cite{power}), driven by weight decay and trackable through internal progress measures as discussed in texts like \cite{nanda2023}; \cite{minegishi2026emergentanalogicalreasoningtransformers} formalize relational reasoning as emergent representational geometry.

\paragraph{Linear representations:} Semantic and relational structure is often encoded linearly in embedding space (\cite{park2025iclrincontextlearningrepresentations, elhage2022toymodelssuperposition}), motivating our use of principal-component analysis as a probe.

\section{Theoretical Framework}

\label{sec:theory}

The rank-ordered one-dimensional manifold is not an accident of this task.
It is the configuration that $L_2$ regularization \emph{must} prefer (\cite{neyshabur2015searchrealinductivebias}).

We make this precise under a standard idealization: the comparison is read
from a linear functional of the entity embeddings. We then show that the
predicted geometry matches our empirical findings, including a characteristic
departure from uniformity that turns out to be a second cognitive signature.

\subsection*{Setup}

Let the relation be decided by a fixed readout direction $w\in\mathbb{R}^d$ with
$\lVert w\rVert=1$, through the scalar \emph{score}
\[
  s_i \;=\; \langle w,\, h_{e_i}\rangle .
\]
The model predicts $e_i\prec e_j$ whenever $s_j - s_i > 0$. This score is exactly
the quantity our PC1 projection $\pi_i$ estimates empirically.

Training supervises only adjacent pairs. We encode this as a margin constraint
with target margin $\gamma>0$:
\begin{equation}
  s_{i+1}-s_i \;\ge\; \gamma,
  \qquad 0\le i<N-1 .
  \label{eq:margin}
\end{equation}

Weight decay then selects, among all feasible embeddings, the one of minimum norm:
\begin{equation}
  \min_{\{h_{e_i}\}}\ \sum_{i=0}^{N-1}\lVert h_{e_i}\rVert^2
  \qquad\text{subject to \eqref{eq:margin}.}
  \label{eq:minnorm}
\end{equation}

\subsection*{The regularized solution is the rank line}

\begin{proposition}
\label{prop:line}
Every minimizer of \eqref{eq:minnorm} satisfies:
\begin{enumerate}
  \item[\textnormal{(i)}] \textbf{Collinearity:}
        $h_{e_i}=s_i\,w$ for all $i$; every off-axis component vanishes.
  \item[\textnormal{(ii)}] \textbf{Monotonicity:}
        $s_0<s_1<\dots<s_{N-1}$, so PC1 recovers the hidden order
        ($r_{\mathrm{rank}}=1$).
  \item[\textnormal{(iii)}] \textbf{Equal spacing\footnote{Under uniform constraint multiplicity.}:}
        Every constraint is active, $s_{i+1}-s_i=\gamma$, giving
        $s_i=\gamma\,(i-\tfrac{N-1}{2})$.
\end{enumerate}
\end{proposition}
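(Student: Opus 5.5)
The plan is to split each embedding into its component along the readout direction and its orthogonal remainder. The margin constraints \eqref{eq:margin} only see the first part, while the objective \eqref{eq:minnorm} penalizes both. This turns the problem into a one-dimensional convex quadratic program in the scores $s_i$, which I would then solve explicitly. Throughout, the readout $w$ with $\lVert w\rVert=1$ is the one fixed in the Setup.

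\emph{Step 1 (collinearity).} For each $i$ write $h_{e_i}=s_i w+u_i$ with $\langle w,u_i\rangle=0$; this is the orthogonal decomposition, so $s_i=\langle w,h_{e_i}\rangle$ as in the Setup. By Pythagoras,
$\sum_i\lVert h_{e_i}\rVert^2=\sum_i s_i^2+\sum_i\lVert u_i\rVert^2$. Feasibility under \eqref{eq:margin} depends only on $(s_0,\dots,s_{N-1})$. Hence replacing every $u_i$ by $0$ keeps the point feasible and strictly lowers the objective unless all $u_i$ already vanish. So every minimizer has $u_i=0$, which is (i). The problem reduces to minimizing $F(s)=\sum_i s_i^2$ over the closed convex polyhedron $\{s: s_{i+1}-s_i\ge\gamma\}$. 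Since $F$ is strictly convex and coercive, a minimizer exists and is unique.

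\emph{Step 2 (monotonicity).} This is immediate from feasibility: $s_{i+1}\ge s_i+\gamma>s_i$ because $\gamma>0$. The configuration $h_{e_i}=s_iw$ lies on the line spanned by $w$ and is not entirely at zero. Its principal axis is therefore $\pm w$, and the PC1 projections are $\pm s_i$, which are strictly monotone in $i$. This gives $r_{\mathrm{rank}}=1$ (up to the sign convention of PCA), which is (ii).

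\emph{Step 3 (equal spacing).} I expect this to be the only step needing a real argument; I would avoid KKT bookkeeping and use a direct reparametrization. Write $s_i=c+\sum_{k<i}g_k$ with gaps $g_k=s_{k+1}-s_k\ge\gamma$ and offset $c=s_0$.

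First, translating all $s_i$ by a common constant preserves feasibility. Minimizing $\sum_i(s_i+t)^2$ over $t$ forces $\sum_i s_i=0$ at the optimum.

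Second, for centered vectors use the identity
\[
\sum_i s_i^2=\frac1N\sum_{i<j}(s_j-s_i)^2 .
\]
Each difference $s_j-s_i=\sum_{i\le k<j}g_k$ is positive and strictly increasing in every gap it contains, and every gap $g_k$ appears in at least one pair (namely $j=k+1$, $i=k$). The objective is therefore strictly increasing in each $g_k$ on the feasible region, so the minimum is attained only when every gap equals $\gamma$, i.e.\ every constraint is active.

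Centering then gives $c=-\gamma\frac{N-1}{2}$, hence $s_i=\gamma\bigl(i-\tfrac{N-1}{2}\bigr)$, which is (iii). As a cross-check, I would verify that the KKT multipliers $\lambda_i=\gamma(i+1)(N-1-i)$ obtained from $2s_i=\lambda_{i-1}-\lambda_i$ are nonnegative, confirming optimality by convexity. These multipliers also show that the interior constraints carry the largest dual weight, which connects to the end-anchoring discussion that follows.
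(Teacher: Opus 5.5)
Your proof is correct. Your Step 1 coincides with the paper's; the difference is in how you obtain equal spacing.

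The paper argues locally. Assuming a slack gap at $k$, it translates the upper block $\{s_{k+1},\dots,s_{N-1}\}$ down and the lower block up, with zero net displacement. It then shows that the first-order change in $\sum_i s_i^2$ is a negative multiple of the difference of the two block means. That difference is positive because the scores increase, so a feasible descent direction exists. (This move actually narrows the slack gap, although the text says it widens. The case split there is also unnecessary, since the block means are always strictly ordered.)

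You instead optimize out the offset and use $\sum_i s_i^2=\frac1N\sum_{i<j}(s_j-s_i)^2$ for centered vectors. This turns the reduced problem into minimizing a function of the gaps that is coordinatewise strictly increasing on $\{g_k\ge\gamma\}$.

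Your route is global rather than first-order. It needs no expansion or sign analysis, it gives the optimal value $\gamma^2N(N^2-1)/12$, and the multipliers $\gamma(i+1)(N-1-i)>0$ provide an independent optimality certificate. The paper's route, in exchange, uses nothing beyond the absence of a feasible descent direction at a minimizer, and it exhibits the concrete norm-reducing move.

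You also supply what the paper's proof leaves implicit: existence and uniqueness, monotonicity directly from feasibility, and the PCA claim. One small ordering fix is needed in that last part. If $r_{\mathrm{rank}}$ is the Pearson correlation, strict monotonicity alone only gives rank correlation $1$. The value $r_{\mathrm{rank}}=1$ follows once (iii) shows $s_i$ is affine in $i$. The PCA step also needs the \emph{centered} configuration to be nonzero, which holds because the $s_i$ are distinct.
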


\begin{proof}
\textbf{Step 1 (collinearity):}
Split each embedding into its component along $w$ and orthogonal to it,
\[
  h_{e_i} \;=\; s_i\,w + h_{e_i}^{\perp},
  \qquad \langle w,\,h_{e_i}^{\perp}\rangle = 0,
\]
so that $\lVert h_{e_i}\rVert^2 = s_i^2 + \lVert h_{e_i}^{\perp}\rVert^2$.
The constraints \eqref{eq:margin} depend only on the scores $s_i$, never on the
orthogonal parts. Setting every $h_{e_i}^{\perp}=0$ therefore preserves
feasibility while strictly lowering the objective unless it is already zero.
Hence $h_{e_i}=s_i w$ at any optimum.

The problem collapses to a convex quadratic program in the scalars alone,
\[
  \min_{\{s_i\}}\ \sum_i s_i^2
  \qquad\text{s.t.}\qquad
  s_{i+1}-s_i\ge\gamma,
\]
which has a unique minimizer.

\medskip
\textbf{Step 2 (equal spacing):}
Suppose for contradiction that some gap is slack: 
$s_{k+1} - s_k > \gamma$ for some $0 \leq k < N-1$. 
Consider shifting the upper block $\{s_{k+1}, \ldots, s_{N-1}\}$ down by $\epsilon > 0$ 
and the lower block $\{s_0, \ldots, s_k\}$ up by $\delta > 0$, where
\begin{equation*}
    (N - 1 - k)\,\epsilon = (k + 1)\,\delta,
\end{equation*}
so the total shift is zero. Every gap within each block is unchanged, the slack gap 
at $k$ widens, and all other tight gaps remain at $\gamma$, so the move is feasible. 
The change in the objective is
\begin{equation*}
    \Delta = -2\epsilon \sum_{m > k} s_m + 2\delta \sum_{m \leq k} s_m + O(\epsilon^2).
\end{equation*}
Substituting $\delta = \frac{(N-1-k)}{(k+1)}\epsilon$ and taking $\epsilon \to 0^+$, 
the sign of $\Delta$ is determined by
\begin{equation*}
    (k+1)\sum_{m > k} s_m - (N-1-k)\sum_{m \leq k} s_m.
\end{equation*}
If this quantity is positive, the original shift decreases the objective; 
if negative, reversing the shift (moving the upper block up and lower block down) 
decreases it. It is zero only when
\begin{equation*}
    \frac{1}{N-1-k}\sum_{m > k} s_m = \frac{1}{k+1}\sum_{m \leq k} s_m,
\end{equation*}
i.e.\ the two block means are equal, which combined with $s_{k+1} > s_k + \gamma > s_k$ 
is impossible for $\gamma > 0$. In either case a feasible descent direction exists, 
contradicting optimality. Every gap is therefore tight: $s_{i+1} - s_i = \gamma$ 
for all $i$.

\medskip
\textbf{Step 3 (monotonicity and centering):}
Equal gaps give $s_i = s_0 + \gamma i$, which is increasing in $i$. Minimizing
$\sum_i (s_0+\gamma i)^2$ over the offset $s_0$ yields
$s_i=\gamma\,(i-\tfrac{N-1}{2})$.
\end{proof}

\subsection*{What the proposition explains}

Proposition~\ref{prop:line} recovers our central findings from first principles.

Off-axis structure costs norm without affecting the fit, so regularization
collapses the embeddings onto a single axis (claim (i)) and that axis is
the rank axis (claim (ii)).

This is also \emph{why weight decay is necessary}: with $\lambda=0$ there is no
pressure toward the line, matching our observation that unregularized runs never
organize. Claims (i) and (ii) appear directly in Figure~\ref{fig:1}, where the
peak-checkpoint embeddings form a monotone curve with $r_{\mathrm{rank}}=0.96$.

\subsection*{End-anchoring: a predicted break from uniform spacing}

The equal-spacing claim (iii) assumes every entity faces the same number of
constraints. It does not.

Interior entities $e_i$ (with $0<i<N-1$) appear in \emph{two} adjacent
constraints, with $e_{i-1}$ and $e_{i+1}$, and are pulled toward the centre from
both sides. The endpoints $e_0$ and $e_{N-1}$ appear in only \emph{one}, and can
drift outward at lower marginal norm cost.

The minimum-norm solution therefore \emph{stretches at the ends and compresses
in the middle}. This is exactly what Figure~\ref{fig:1} shows: interior
ranks bunch together while $e_0$ and $e_{N-1}$ extend past the linear trend, and
the geometric separation $|\pi_i-\pi_j|$ accelerates at the largest distances
(Figure~\ref{fig:2}, (b)).

This same asymmetry is the geometric form of the \emph{end-anchor effect} (\cite{Holyoak1981-kx}), a
second canonical signature of transitive inference in humans and animals, in
which judgements involving the terminal items are made faster and more reliably
than those confined to the interior.

\subsection*{Scope}

The proposition characterizes the regularized \emph{optimum} under a linear
readout. Two caveats follow.

First, it does not describe the optimization \emph{trajectory}: the transient,
grokking-like dynamics of Figure~\ref{fig:3} are a property of that
trajectory and lie outside this account.

Second, the attention and MLP sub-layers of the trained model are abstracted into
the single fixed readout $w$. We therefore regard the argument as an explanation
of \emph{which} solution is preferred, empirically corroborated by the recovered
geometry, and leave a trajectory-level theory, and its possible link to
description-length accounts of weight decay to future work.
\section{Task and Methods}
\label{sec:methods}

\paragraph{Setup:}
Let $E=\{e_0,\dots,e_{N-1}\}$ carry a hidden total order
$e_0 \prec e_1 \prec \cdots \prec e_{N-1}$. Each entity is an atomic token and
the order is never given explicitly. A comparison is a triple $(e_i,\,?,\,e_j)$
whose target is the relation
\begin{equation}
r(e_i,e_j)=
\begin{cases}
\;\prec ;& i<j,\\[2pt]
\;\succ ;& i>j.
\end{cases}
\end{equation}

\paragraph{Train/OOD split:}
The model is trained \emph{only} on adjacent comparisons, in both directions,
\begin{equation}
\mathcal{D}_{\mathrm{train}}=
\bigl\{(e_i,e_{i+1}),\,(e_{i+1},e_i)\ :\ 0\le i<N-1\bigr\},
\end{equation}
and evaluated on the disjoint held-out set of all distant pairs,
\begin{equation}
\mathcal{D}_{\mathrm{OOD}}=
\bigl\{(e_i,e_j)\ :\ |i-j|>1\bigr\},
\qquad
\mathcal{D}_{\mathrm{train}}\cap\mathcal{D}_{\mathrm{OOD}}= \varnothing 
\end{equation}
Since no distant pair is ever observed, success on $\mathcal{D}_{\mathrm{OOD}}$
requires reconstructing the global order $\prec$ from local comparisons alone, the
computational form of the Bryant-Trabasso paradigm.

\paragraph{Model:}
We use a single-layer Transformer $f_\theta$ with entity embeddings
$\{h_e\in\mathbb{R}^{d}\}$, two attention heads, and unembedding head $W_U$.
The relation is read from the final-position residual; we train with
cross-entropy on the relation token using AdamW (\cite{loshchilov2019decoupledweightdecayregularization}) with weight decay $\lambda$.

\paragraph{Measuring the number line:}
Let $H=[\,h_{e_0};\dots;h_{e_{N-1}}\,]\in\mathbb{R}^{N\times d}$ be the entity
embeddings and $u_1$ their first principal component, with eigenvalues
$\lambda_1\ge\dots\ge\lambda_d$. We quantify ordinal structure three ways:
\begin{enumerate}\itemsep2pt
  \item \textbf{Linearity:} the variance fraction on $u_1$,
        $\rho_1=\lambda_1/\sum_k \lambda_k$, large when embeddings lie near a line.
  \item \textbf{Rank alignment:} the correlation between projected position
        $\pi_i=\langle h_{e_i},u_1\rangle$ and true rank $i$,
        \begin{equation}
            r_{\mathrm{rank}}=\operatorname{corr}\bigl(\{\pi_i\}_{i},\{i\}_{i}\bigr),
        \end{equation}
        near $\pm1$ when the line recovers the hidden order.
  \item \textbf{Geometric separation:} $g_{ij}=|\pi_i-\pi_j|$, the distance
        between two items on the recovered line.
\end{enumerate}

\paragraph{Behavioral signatures:}
Because accuracy saturates once the order is learned, we measure graded effects
that persist at ceiling. For a query $(e_i,e_j)$ with correct and incorrect
relation tokens $c,\bar{c}$, the \emph{decision confidence} is the logit margin
\begin{equation}
    m_{ij}=\bigl(W_U z_{ij}\bigr)_c-\bigl(W_U z_{ij}\bigr)_{\bar{c}},
\end{equation}
where $z_{ij}$ is the final-position residual. The symbolic distance effect
predicts that both $m_{ij}$ and $g_{ij}$ increase with rank distance $|i-j|$;
we report their Pearson correlation with $|i-j|$ over $\mathcal{D}_{\mathrm{OOD}}$.

\section{Synthetic Experiment on Small Transformers}

The synthetic model is a single-layer, single-block Transformer with hidden
dimension\footnote{We use $d = 64$ for reliable grokking when training with more entities (N > 15).} $d=32$, $2$ attention heads, a $4d$-wide GELU MLP, learned positional
embeddings, and pre-LayerNorm; the relation is decoded from the final-position
residual by a linear unembedding head ($\approx 5\times10^{4}$ parameters).
We optimize the relation-token cross-entropy with AdamW (learning rate
$10^{-3}$, batch size $64$) and weight decay $\lambda$ swept per chain length
$N$, training for up to $5\times10^{4}$ steps and reporting all quantities at
the peak-OOD accuracy checkpoint. The experiment is run on a 16GB M2 Macbook Air.

We run the experiment for N = 10, 15, and 20 entities. The figures for N = 10, 20 can be found in the appendix. The following discussion is a qualitative breakdown of the trends seen in all three results. 

\begin{figure}[htbp]
    \centering
    \includegraphics[width=\textwidth]{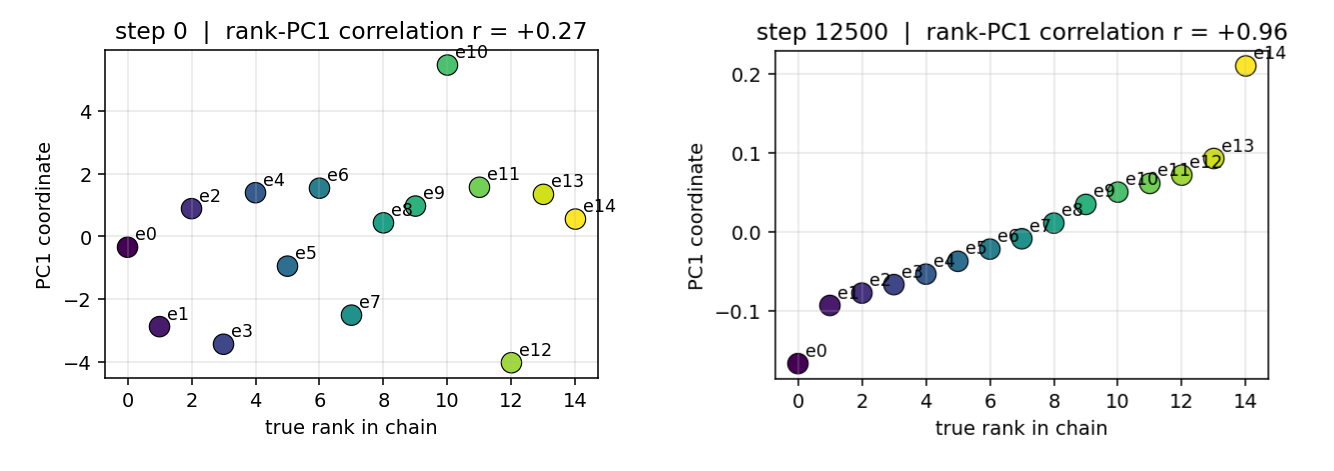} 
    \caption{First principal projection vs. true rank in chain for N = 15 entities.}
    \label{fig:1}
\end{figure}

\begin{figure}[htbp]
    \centering
    \includegraphics[width=\textwidth]{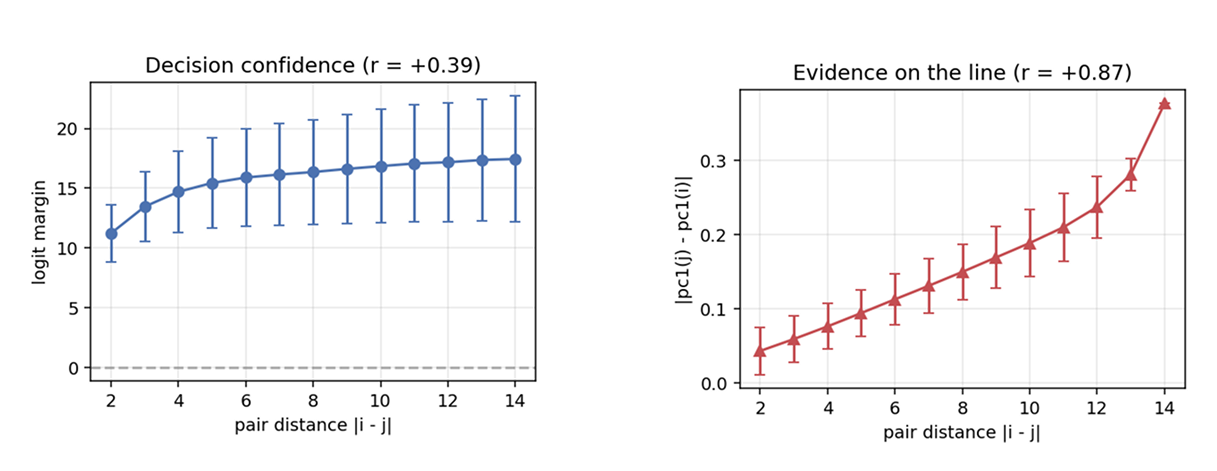} 
    \caption{(a) Decision confidence as a function of distance between compared entities. (b) PC1 distance as a function of pair distance.}
    \label{fig:2}
\end{figure}

\begin{figure}[htbp]
    \centering
    \includegraphics[width=\textwidth]{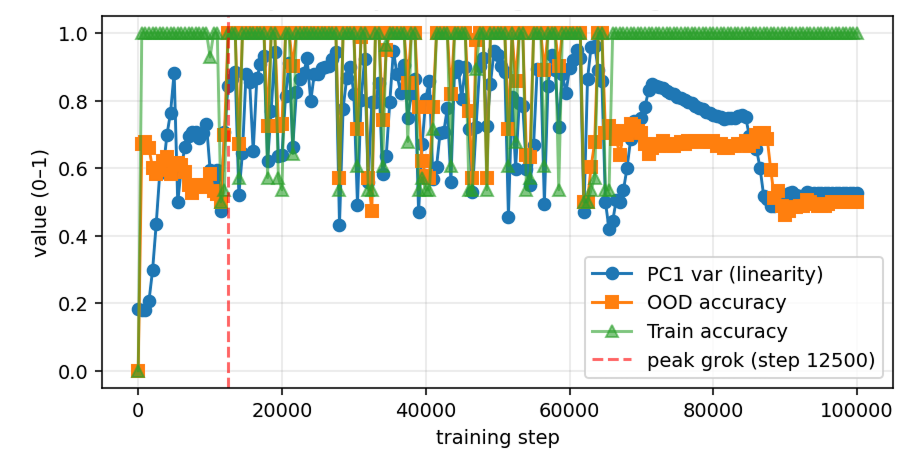} 
    \caption{Training dynamics for N = 15 entities.}
    \label{fig:3}
\end{figure}

Figure \ref{fig:1} (rank vs.\ PC1) makes the linearity argument explicit by plotting each entity's projection onto the first principal component against its true rank: at initialization the relationship is essentially random, whereas at peak the points form a clean monotonic staircase, confirming that the principal axis recovers the hidden total order almost exactly. Figure \ref{fig:2} tests for the symbolic distance effect using measures that remain informative once accuracy saturates: decision confidence and geometric separation on the recovered line, both increase monotonically with rank distance, exactly as predicted if comparisons are read off a mental number line.

Figure \ref{fig:3} overlays PC1 variance, OOD accuracy, and train accuracy across training; train accuracy saturates almost immediately while OOD generalization and embedding linearity rise together and later, and crucially both degrade after the peak (dashed line), illustrating that the structured solution is transient under fixed weight decay — which is why all analyses are reported at the peak-OOD checkpoint.

\section{Scaling to Large Language Models}

The synthetic results raise the obvious question: is the rank line an artifact
of a tiny model trained on a single hand-built order, or does the same geometry
survive in large models trained on natural language? A pretrained LLM is never
given an explicit comparison task, yet its training corpus is saturated with
ordinal structure (number sequences, calendars, magnitude adjectives). If the
rank-line account captures something general, fragments of it should be legible
in the \emph{frozen} representations of an off-the-shelf model (\cite{gurnee2024languagemodelsrepresentspace}).

We probe \texttt{Qwen2.5-1.5B} on three ordinal domains chosen to vary in how
cleanly they form a total order: the digits $0$-$9$, the twelve months of the
year, and a set of size adjectives ordered by magnitude
(\emph{tiny}, \emph{little}, \emph{small}, \dots, \emph{enormous}, \emph{gigantic}).
Each item is tokenized in isolation with special tokens (BOS, etc.) dropped, and
its layer-$\ell$ representation is the mean of the residual-stream activations over
the item's sub-token positions at that layer: a single-token item contributes its
one vector, while a word that splits (e.g.\ \texttt{Sept}+\texttt{ember}) contributes
the average of its pieces. We mean-pool rather than take the final sub-token, the
standard alternative, which leans on the final position having attended over the
whole word under causal masking because averaging privileges no single piece;
neither choice is wrong, and we report the mean-pooled result. Stacking these into
$H \in \mathbb{R}^{N \times d}$, we apply the diagnostics of Section~4: PCA for
linearity, the rank-PC1 correlation $r_{\text{rank}}$ (Figure~\ref{fig:llm_pca}),
and because a model carrying many other features need not place the ordinal axis
on its leading component, the best $|r_{\text{rank}}|$ over the top three PCs as a
function of depth (Figure~\ref{fig:llm_linearity}), together with a leave-one-out
(LOO) (\cite{belinkov2021probingclassifierspromisesshortcomings}) linear probe regressing the hidden state onto rank, reported as $R^2$. To ask
whether distinct domains share an encoding direction, we also compute the cosine
similarity between the probe directions fit on each pair of domains, layer by layer.

\begin{figure}[htbp]
    \centering
    \includegraphics[width=0.7\textwidth, trim={0 0 0 0.62cm}, clip]{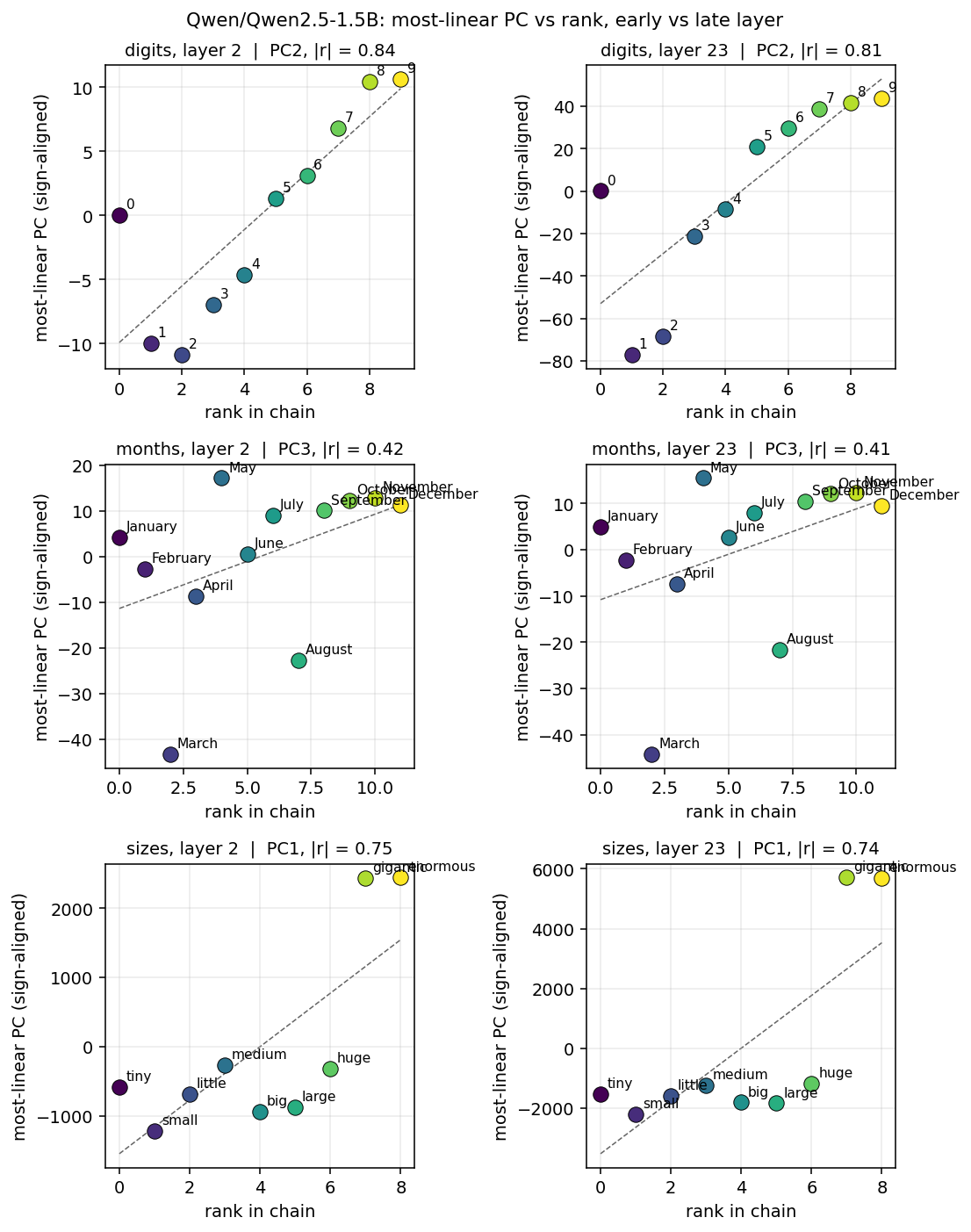} 
    \caption{PCA plots for the three ordinal domains: Digits (0-9), months of the year, and size adjectives, in early and late layers.}
    \label{fig:llm_pca}
\end{figure}

\paragraph{Ordinal structure is present but domain-dependent.}
All three domains exhibit a positive rank-PC correlation that is stable between
early and late layers (Figure~\ref{fig:llm_pca}): digits are recovered most cleanly
($r \approx 0.81$ on PC2), sizes moderately, and months most weakly
($r = 0.41$). The size adjectives form the tightest ordering, with the
diminutives at one extreme and \emph{enormous}/\emph{gigantic} at the other,
consistent with magnitude being an unusually direct, frequently verbalised ordinal
relation. Digits show a characteristic distortion: the token ``0'' sits far from
$1$-$9$ along PC1, so the ordinal structure of the remaining digits is carried by a
lower component, perhaps due to the vastly different use-cases of zero as a digit relative to the other digits.

\paragraph{The cyclic domain resists the line.}
Months are the weakest of the three, which is exactly what the rank-line picture
predicts. A calendar is cyclic; December is adjacent to January. A cycle
cannot be embedded isometrically on a single linear axis. The weak,
layer-stable months correlation is therefore not a failure of the probe but a
signature that the underlying relation is not a total order, directly addressing
the cyclic-relation question raised in Section~7.

\paragraph{Structure is layer-stable, not late-emerging.}
Across all three domains the rank-PC correlation and probe $R^2$ are roughly flat
over network depth (Figure~\ref{fig:llm_linearity}), in contrast to the late,
transient emergence seen in the synthetic model (Figure~3). This is expected: the
LLM is not \emph{learning} the order during measurement, the ordinal geometry is
already fixed in its pretrained weights, so we observe its steady state rather than
its formation. The digit probe $R^2$ is the noisiest, dipping sharply (even below
zero under LOO) in the middle layers, where the ``0'' outlier most distorts the
fit.

\begin{figure}[htbp]
    \centering
    \includegraphics[width=\textwidth, trim={0 0 0 0.62cm}, clip]{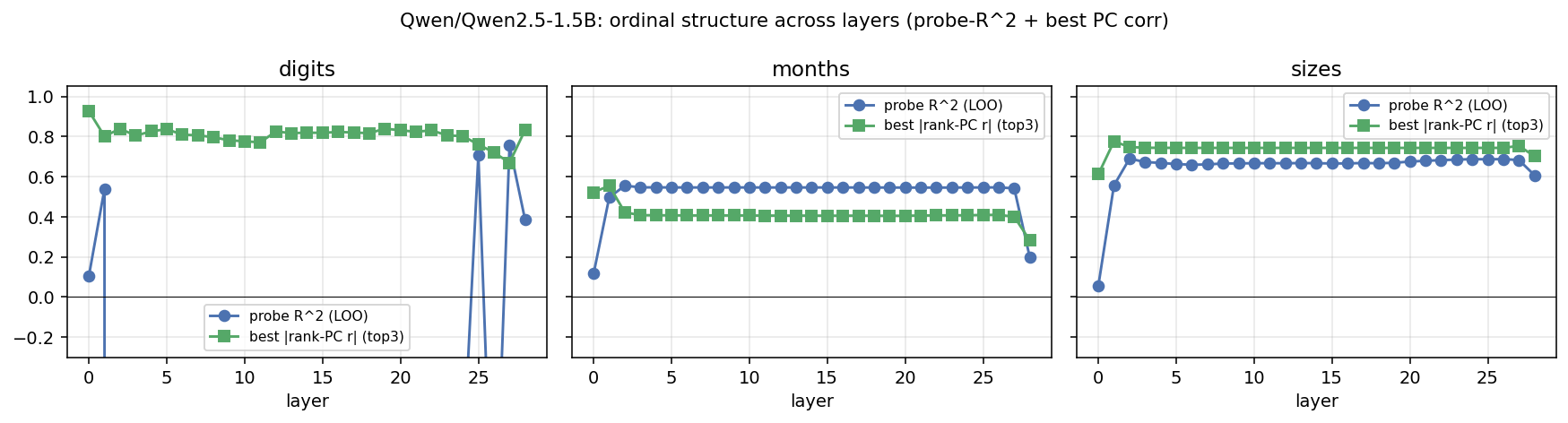} 
    \caption{Probe R² (LOO) and best rank–PC correlation as a function of layer depth, for each ordinal domain.}
    \label{fig:llm_linearity}
\end{figure}
\paragraph{Each domain occupies its own axis.}
Figure~\ref{fig:llm_shared} shows that the probe directions for different domains
are nearly orthogonal at every layer ($|\cos|<0.2$ throughout, with only a mild
rise in the final layers). The model does not appear to encode a single,
domain-general ``ordinality'' direction; instead each ordinal concept occupies its
own subspace, even while all three individually obey the same low-dimensional,
rank-aligned geometry. This is consistent with the synthetic theory, which fixes a
separate readout $w$ per task and makes no claim that independent orders share an
axis.

\begin{figure}[htbp]
    \centering
    \includegraphics[width=0.7\textwidth, trim={0 0 0 0.62cm}, clip]{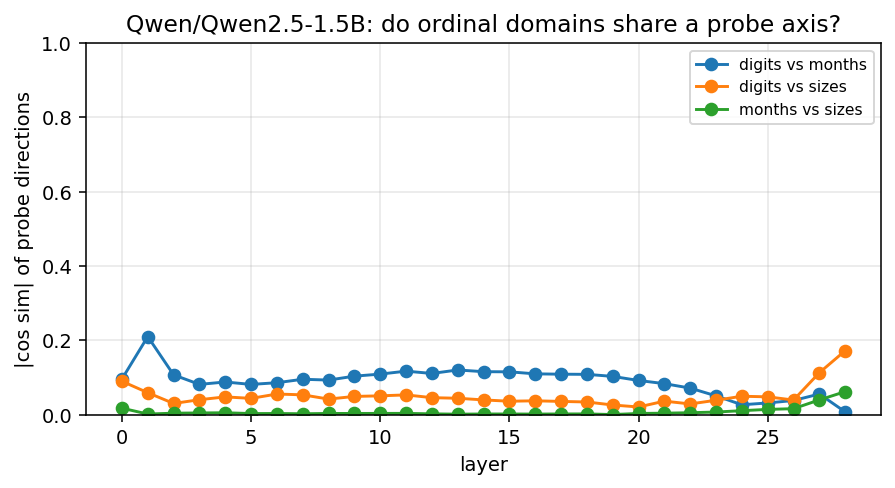} 
    \caption{Cosine similarity between per-domain probe directions, by layer. }
    \label{fig:llm_shared}
\end{figure}

\paragraph{Takeaway.}
The qualitative prediction of the rank-line account survives the jump from a
$\sim$$5\times10^4$-parameter toy model to a $1.5$B-parameter pretrained LLM:
ordinal concepts are stored along approximately linear, rank-aligned manifolds
whose cleanliness tracks how genuinely total the underlying order is
(sizes $>$ digits $>$ the cyclic months). What does \emph{not} transfer is
universality: the model allocates a distinct axis per domain rather than reusing
one. As in the synthetic study these observations are correlational; we leave
causal interventions and a broader model and domain sweep to future work.

\section{Future Work}
The results in this paper point to several natural extensions. The observations are correlational, we make no claims on the causality of these observations. The effect of different depths and architectures on these results remain an interesting avenue, which also motivate theoretical work in trajectory level transient grokking dynamics. Finally, the LLM study is observational and limited to one 1.5B model and three hand-chosen domains; scaling across model sizes, probing cyclic and partial orders directly (the months result suggests a loop, not a line), and testing whether interventions on the recovered axis change model behaviour would together turn the present geometric description into a mechanistic and causal account.
\newpage
\bibliographystyle{plainnat} 
\bibliography{references}    

@ARTICLE{Bryant1971-tp,
  title    = "Transitive inferences and memory in young children",
  author   = "Bryant, P E and Trabasso, T",
  journal  = "Nature",
  volume   =  232,
  number   =  5311,
  pages    = "456--458",
  month    =  aug,
  year     =  1971,
  address  = "England",
  language = "en"
}

@Article{Steirn1995,
author={Steirn, Janice N.
and Weaver, Janice E.
and Zentall, Thomas R.},
title={Transitive inference in pigeons: Simplified procedures and a test of value transfer theory},
journal={Animal Learning {\&} Behavior},
year={1995},
month={Mar},
day={01},
volume={23},
number={1},
pages={76-82},
issn={1532-5830},
doi={10.3758/BF03198018},
url={https://doi.org/10.3758/BF03198018}
}

@misc{minegishi2026emergentanalogicalreasoningtransformers,
      title={Emergent Analogical Reasoning in Transformers}, 
      author={Gouki Minegishi and Jingyuan Feng and Hiroki Furuta and Takeshi Kojima and Yusuke Iwasawa and Yutaka Matsuo},
      year={2026},
      eprint={2602.01992},
      archivePrefix={arXiv},
      primaryClass={cs.AI},
      url={https://arxiv.org/abs/2602.01992}, 
}

@article{power,
  author       = {Alethea Power and
                  Yuri Burda and
                  Harri Edwards and
                  Igor Babuschkin and
                  Vedant Misra},
  title        = {Grokking: Generalization Beyond Overfitting on Small Algorithmic Datasets},
  journal      = {CoRR},
  volume       = {abs/2201.02177},
  year         = {2022},
  url          = {https://arxiv.org/abs/2201.02177},
  eprinttype   = {arXiv},
  eprint       = {2201.02177},
  bibsource    = {dblp computer science bibliography, https://dblp.org}
}

@article{RILEY19761,
title = {The representation of comparative relations and the transitive inference task},
journal = {Journal of Experimental Child Psychology},
volume = {22},
number = {1},
pages = {1-22},
year = {1976},
issn = {0022-0965},
doi = {https://doi.org/10.1016/0022-0965(76)90085-0},
url = {https://www.sciencedirect.com/science/article/pii/0022096576900850},
author = {Christine A Riley}
}

@misc{nanda2023,
      title={Progress measures for grokking via mechanistic interpretability}, 
      author={Neel Nanda and Lawrence Chan and Tom Lieberum and Jess Smith and Jacob Steinhardt},
      year={2023},
      eprint={2301.05217},
      archivePrefix={arXiv},
      primaryClass={cs.LG},
      url={https://arxiv.org/abs/2301.05217}, 
}

@misc{park2025iclrincontextlearningrepresentations,
      title={ICLR: In-Context Learning of Representations}, 
      author={Core Francisco Park and Andrew Lee and Ekdeep Singh Lubana and Yongyi Yang and Maya Okawa and Kento Nishi and Martin Wattenberg and Hidenori Tanaka},
      year={2025},
      eprint={2501.00070},
      archivePrefix={arXiv},
      primaryClass={cs.CL},
      url={https://arxiv.org/abs/2501.00070}, 
}

@ARTICLE{Holyoak1981-kx,
  title    = "A positional discriminability model of linear-order judgments",
  author   = "Holyoak, K J and Patterson, K K",
  journal  = "J Exp Psychol Hum Percept Perform",
  volume   =  7,
  number   =  6,
  pages    = "1283--1302",
  month    =  dec,
  year     =  1981,
  address  = "United States",
  language = "en"
}

@misc{elhage2022toymodelssuperposition,
      title={Toy Models of Superposition}, 
      author={Nelson Elhage and Tristan Hume and Catherine Olsson and Nicholas Schiefer and Tom Henighan and Shauna Kravec and Zac Hatfield-Dodds and Robert Lasenby and Dawn Drain and Carol Chen and Roger Grosse and Sam McCandlish and Jared Kaplan and Dario Amodei and Martin Wattenberg and Christopher Olah},
      year={2022},
      eprint={2209.10652},
      archivePrefix={arXiv},
      primaryClass={cs.LG},
      url={https://arxiv.org/abs/2209.10652}, 
}

@misc{loshchilov2019decoupledweightdecayregularization,
      title={Decoupled Weight Decay Regularization}, 
      author={Ilya Loshchilov and Frank Hutter},
      year={2019},
      eprint={1711.05101},
      archivePrefix={arXiv},
      primaryClass={cs.LG},
      url={https://arxiv.org/abs/1711.05101}, 
}

@ARTICLE{Mcgonigle1977-qz,
  title    = "Are monkeys logical?",
  author   = "Mcgonigle, Brendan O and Chalmers, Margaret",
  journal  = "Nature",
  volume   =  267,
  number   =  5613,
  pages    = "694--696",
  month    =  jun,
  year     =  1977
}

@ARTICLE{Davis1992-ui,
  title    = "Transitive inference in rats (Rattus norvegicus)",
  author   = "Davis, H",
  journal  = "J Comp Psychol",
  volume   =  106,
  number   =  4,
  pages    = "342--349",
  month    =  dec,
  year     =  1992,
  address  = "United States",
  language = "en"
}

@inproceedings{mikolov-etal-2013-linguistic,
    title = "Linguistic Regularities in Continuous Space Word Representations",
    author = "Mikolov, Tomas  and
      Yih, Wen-tau  and
      Zweig, Geoffrey",
    editor = "Vanderwende, Lucy  and
      Daum{\'e} III, Hal  and
      Kirchhoff, Katrin",
    booktitle = "Proceedings of the 2013 Conference of the North {A}merican Chapter of the Association for Computational Linguistics: Human Language Technologies",
    month = jun,
    year = "2013",
    address = "Atlanta, Georgia",
    publisher = "Association for Computational Linguistics",
    url = "https://aclanthology.org/N13-1090/",
    pages = "746--751"
}

@misc{neyshabur2015searchrealinductivebias,
      title={In Search of the Real Inductive Bias: On the Role of Implicit Regularization in Deep Learning}, 
      author={Behnam Neyshabur and Ryota Tomioka and Nathan Srebro},
      year={2015},
      eprint={1412.6614},
      archivePrefix={arXiv},
      primaryClass={cs.LG},
      url={https://arxiv.org/abs/1412.6614}, 
}

@misc{belinkov2021probingclassifierspromisesshortcomings,
      title={Probing Classifiers: Promises, Shortcomings, and Advances}, 
      author={Yonatan Belinkov},
      year={2021},
      eprint={2102.12452},
      archivePrefix={arXiv},
      primaryClass={cs.CL},
      url={https://arxiv.org/abs/2102.12452}, 
}

@misc{gurnee2024languagemodelsrepresentspace,
      title={Language Models Represent Space and Time}, 
      author={Wes Gurnee and Max Tegmark},
      year={2024},
      eprint={2310.02207},
      archivePrefix={arXiv},
      primaryClass={cs.LG},
      url={https://arxiv.org/abs/2310.02207}, 
}
\newpage

\appendix
\section{Technical Appendices and Supplementary Material}
\subsection{Data for N = 10 entities} 
\begin{figure}[htbp]
    \centering
    \includegraphics[width=\textwidth]{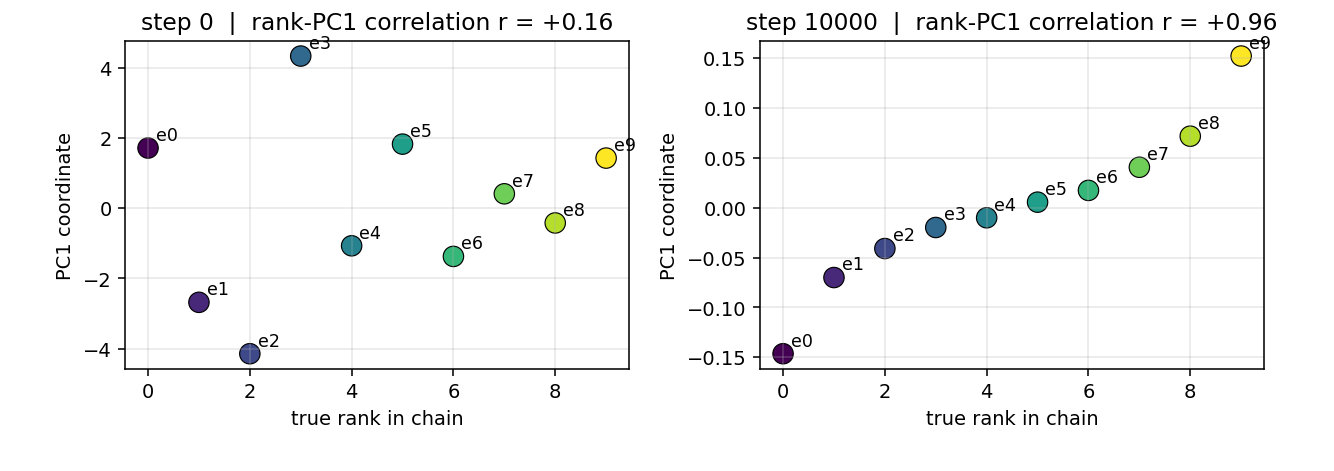} 
    \caption{First principal projection vs. true rank in chain for N = 10 entities.}
    \label{fig:4}
\end{figure}

\begin{figure}[htbp]
    \centering
    \includegraphics[width=\textwidth]{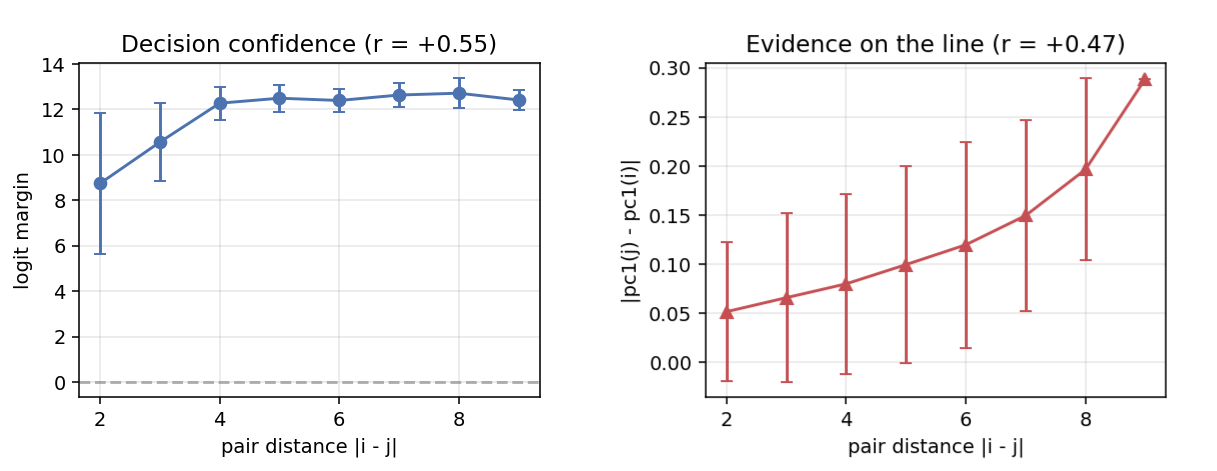} 
    \caption{(a) Decision confidence as a function of distance between compared entities. (b) PC1 distance as a function of pair distance.}
    \label{fig:5}
\end{figure}

\begin{figure}[htbp]
    \centering
    \includegraphics[height=0.22\textheight, keepaspectratio]{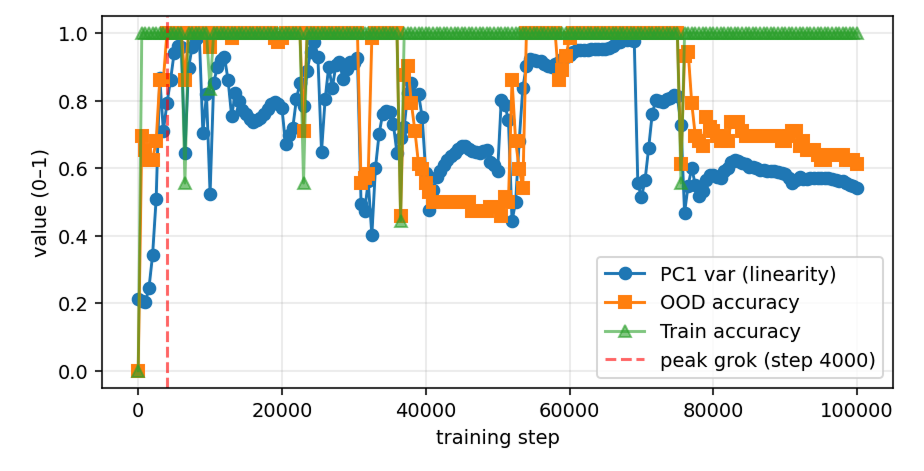}
    \caption{Training dynamics for N = 10 entities.}
    \label{fig:6}
\end{figure}
\newpage

\subsection{Data for N = 20 entities} 
\begin{figure}[htbp]
    \centering
    \includegraphics[width=\textwidth]{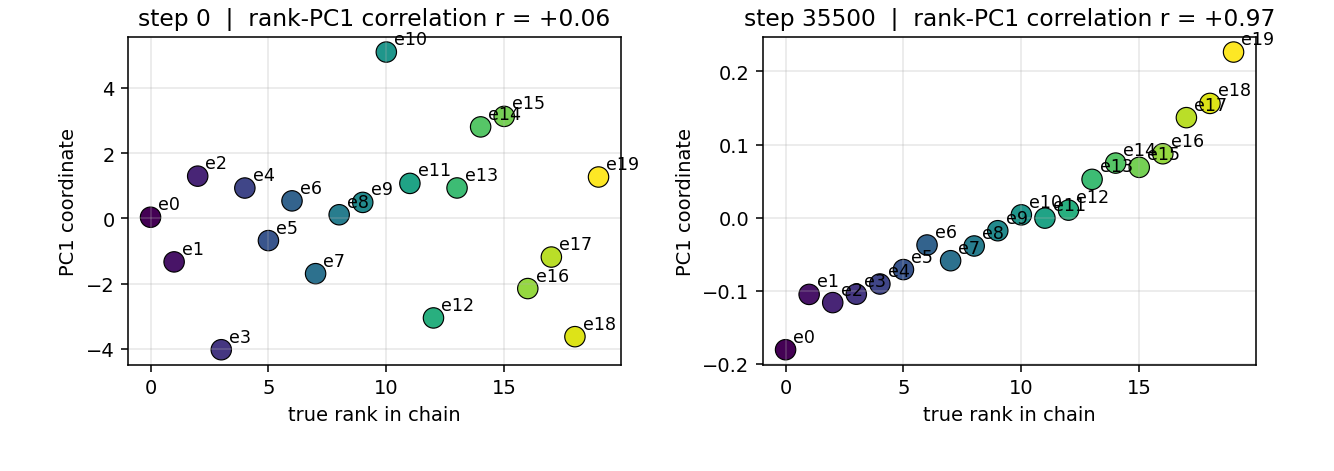} 
    \caption{First principal projection vs. true rank in chain for N = 20 entities.}
    \label{fig:7}
\end{figure}

\begin{figure}[htbp]
    \centering
    \includegraphics[width=\textwidth]{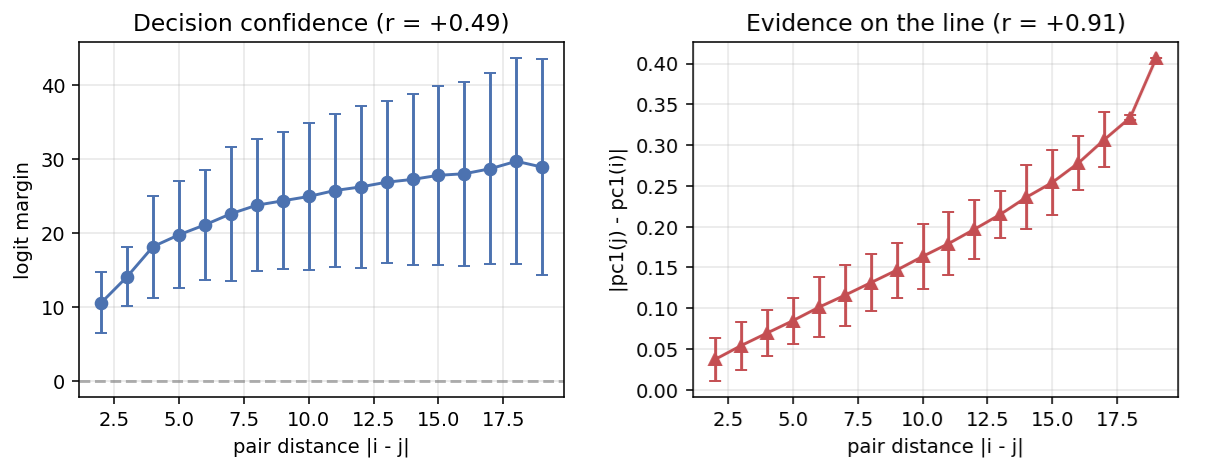} 
    \caption{(a) Decision confidence as a function of distance between compared entities. (b) PC1 distance as a function of pair distance.}
    \label{fig:8}
\end{figure}

\begin{figure}[htbp]
    \centering
    \includegraphics[height=0.22\textheight, keepaspectratio]{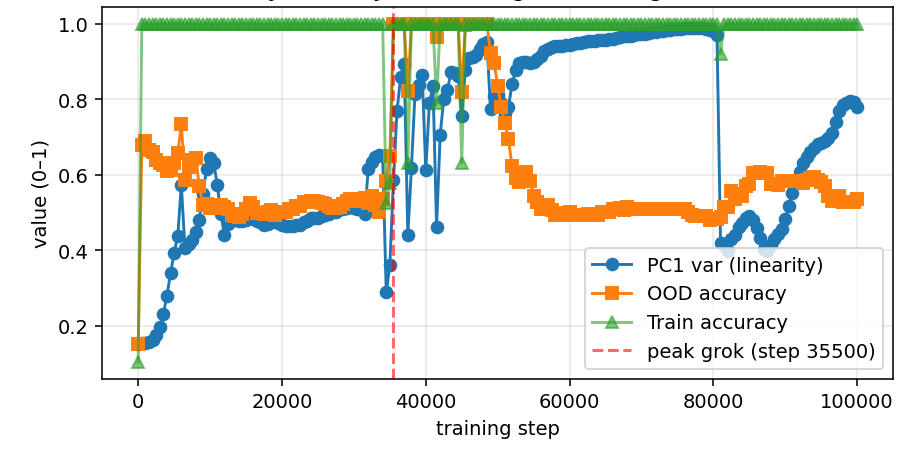} 
    \caption{Training dynamics for N = 20 entities.}
    \label{fig:9}
\end{figure}

\newpage

\end{document}